\documentclass{article} 
\usepackage[dvips]{graphicx}
\usepackage{amssymb,amsmath,color}
\usepackage{url}
\usepackage{algorithm}
\usepackage{algorithmic}

\oddsidemargin .25in    
\evensidemargin .25in \marginparwidth 0.07 true in
\topmargin -0.5in \addtolength{\headsep}{0.25in}
\textheight 8.5 true in       
\textwidth 6.0 true in        
\widowpenalty=10000 \clubpenalty=10000

\parindent 0pt
\topsep 4pt plus 1pt minus 2pt
\partopsep 1pt plus 0.5pt minus 0.5pt
\itemsep 2pt plus 1pt minus 0.5pt
\parsep 2pt plus 1pt minus 0.5pt
\parskip .5pc

\title{Convex Relaxations for Learning Bounded Treewidth Decomposable Graphs}

\author{K. S. Sesh Kumar\\
INRIA-Sierra project-team\\ 
Laboratoire d{'}Informatique \\
de l{'}Ecole Normale Sup{\'e}rieure\\
Paris, France\\
\texttt{sesh-kumar.karri@inria.fr}\\
\and Francis Bach\\
INRIA-Sierra project-team\\ 
Laboratoire d{'}Informatique \\
de l{'}Ecole Normale Sup{\'e}rieure\\
Paris, France\\
\texttt{francis.bach@inria.fr}
}

\newcommand{\BEAS}{\begin{eqnarray*}}
\newcommand{\EEAS}{\end{eqnarray*}}
\newcommand{\BEA}{\begin{eqnarray}}
\newcommand{\EEA}{\end{eqnarray}}
\newcommand{\BEQ}{\begin{equation}}
\newcommand{\EEQ}{\end{equation}}
\newcommand{\BIT}{\begin{itemize}}
\newcommand{\EIT}{\end{itemize}}
\newcommand{\BNUM}{\begin{enumerate}}
\newcommand{\ENUM}{\end{enumerate}}
\newcommand{\BA}{\begin{array}}
\newcommand{\EA}{\end{array}}

\newcommand{\rb}{\mathbb{R}}
\newcommand{\BlackBox}{\rule{1.5ex}{1.5ex}}  

\newenvironment{proof}{\par\noindent{\bf Proof\ }}{\hfill\BlackBox\\[2mm]}

\newtheorem{proposition}{Proposition}

\newcommand{\mysec}[1]{Section~\ref{sec:#1}}
\newcommand{\myalg}[1]{Algorithm~\ref{alg:#1}}
\newcommand{\eq}[1]{Eq.~(\ref{eq:#1})}
\newcommand{\myfig}[1]{Figure~\ref{fig:#1}}

\bibliographystyle{abbrv} 
\begin{document} 
\maketitle

\begin{abstract} 
We consider the problem of learning the structure of undirected graphical models with bounded treewidth, within the maximum likelihood framework. This is an NP-hard problem and most approaches consider local search techniques. In this paper, we pose it as a combinatorial optimization problem, which is then relaxed to a convex optimization problem that involves searching over the forest and hyperforest polytopes with special structures, independently.
A supergradient method is used to solve the dual problem, with a run-time complexity of $O(k^3 n^{k+2} \log n)$ for each iteration, where $n$ is the number of variables and $k$ is a bound on the treewidth. We compare our approach to state-of-the-art methods on synthetic datasets and classical benchmarks, showing the gains of the novel convex approach.
\end{abstract} 

\section{Introduction}

Graphical models provide a versatile set of tools for probabilistic modeling of large collections of interdependent variables. They are defined by graphs that encode the conditional independences among the random variables, together with  potential functions or conditional probability distributions that encode the specific local interactions leading to globally well-defined probability distributions~\cite{bishop2006pattern,wainwright2008graphical,koller2009probabilistic}.

In many domains such as computer vision, natural language processing or bioinformatics, the structure of the graph follows naturally from the constraints of the problem at hand. In other situations, it might be desirable to estimate this structure from a set of observations. It allows (a) a statistical fit of rich probability distributions that can be considered for further use, and (b) discovery of structural relationship between different variables. In the former case, distributions with tractable inference are often desirable, i.e., inference with run-time complexity does not scale exponentially in the number of  variables in the model. The simplest constraint to ensure tractability is to impose tree-structured graphs~\cite{Chow68approximatingdiscrete}. However, these distributions are not rich enough, and following earlier work~\cite{malvestuto1991approximating,bach2002thin,narasimhan2004pac,Chechetka:2007,GogateWD10,t-cherry:2011}, we consider models with bounded \emph{treewidth}, not simply by one (i.e., trees), but by a small constant~$k$.

Beyond the possibility of fitting tractable distributions (for which probabilistic inference has linear complexity in the number of variables),  learning bounded-treewidth graphical models is a key to design approximate inference algorithms for graphs with higher treewidth. Indeed, as shown by~\cite{Saul95,wainwright2008graphical,kolomogrov2012}, approximating general distributions by tractable distributions is a common tool in variational inference. However, in practice, the complexity of   variational distributions is often limited to trees (i.e., $k=1$), since these are the only ones with exact polynomial-time structure learning algorithms. The convex relaxation designed in this paper enables us to augment the applicability of variational inference, by allowing a finer trade-off between run-time complexity and approximation quality.

Apart from trees, learning the structure of a directed or undirected graphical model, with or without constraints on the treewidth, remains a hard problem. Two types of algorithms have emerged, based on the two equivalent definitions of graphical models: (a) by testing conditional independence relationships~\cite{spirtes2001causation} or (b) by maximizing the log-likelihood of the data using the factorized form of the  distribution~\cite{friedman2003being}. 
In the specific context of learning bounded-treewidth graphical models, the latter approach has been shown to be NP-hard~\cite{srebro2002maximum} and led to various approximate algorithms based on local search techniques~\cite{malvestuto1991approximating,deshpande2001efficient,Karger:2001, bach2002thin, Shahaf:2009, t-cherry:2011} while the former approach led to  algorithms based on independence tests~\cite{narasimhan2004pac, Chechetka:2007, GogateWD10}, which have recovery guarantees when the data-generating distribution has low treewidth. Malvestuto~\cite{malvestuto1991approximating} proposed a greedy heuristic of hyperedge selection with least incremental entropy. Deshpande et al.~\cite{deshpande2001efficient} proposed a simple edge selection technique that maintains decomposability of the graph while minimizing the KL-divergence to the original distribution. Karger et al.~\cite{Karger:2001} proposed the first convex optimization approach to learn the maximum weighted $k$-windmill, a sub-class of the decomposable graph. Bach et al.~\cite{bach2002thin} gave an approach which iteratively refines the hyperedge selection based on KL-divergence using iterative scaling. Shahaf et al.~\cite{Shahaf:2009} proposed another convex optimization approach with Bethe approximation of the likelihood using graph-cuts. Sz{\'a}ntai et al.~\cite{t-cherry:2011} proposed a hyperedge selection criteria based on high mutual information within a hyperedge. Narasimhan et al.~\cite{narasimhan2004pac} performs independence tests by solving submodular optimization problems and derives a decomposable graph using dynamic programming. Chechetka et al.~\cite{Chechetka:2007} used the weaker notion of conditional mutual information instead of conditional independence to learn approximate junction trees. Gogate et al.~\cite{GogateWD10} uses low mutual information criteria to recursively split the state space to smaller subsets until no further splits are possible.

In this paper, we make the following contributions:
\begin{itemize}
\item We provide a novel convex relaxation for learning bounded-treewidth decomposable graphical models from data in polynomial time. This is achieved by posing  the problem as a combinatorial optimization problem in \mysec{MLDG}, which is relaxed to a convex optimization problem that involves the graphic and hypergraphic matroids, as shown in \mysec{relax}.
\item We show in \mysec{LD} how a supergradient ascent method may be used to solve the dual optimization problem, using greedy algorithms as inner loops on the two matroids. Each iteration has a run-time complexity of $O(k^3 n^{k+2} \log n)$, where $n$ is the number of variables. We also show how to round the obtained fractional solution.
\item We compare our approach to state-of-the-art methods on synthetic datasets and classical benchmarks in \mysec{results}, showing the gains of the novel convex approach.
\end{itemize}

\begin{figure}[t]
\begin{center}
\begin{tabular}{ccc}
\includegraphics[width=0.2\textwidth]{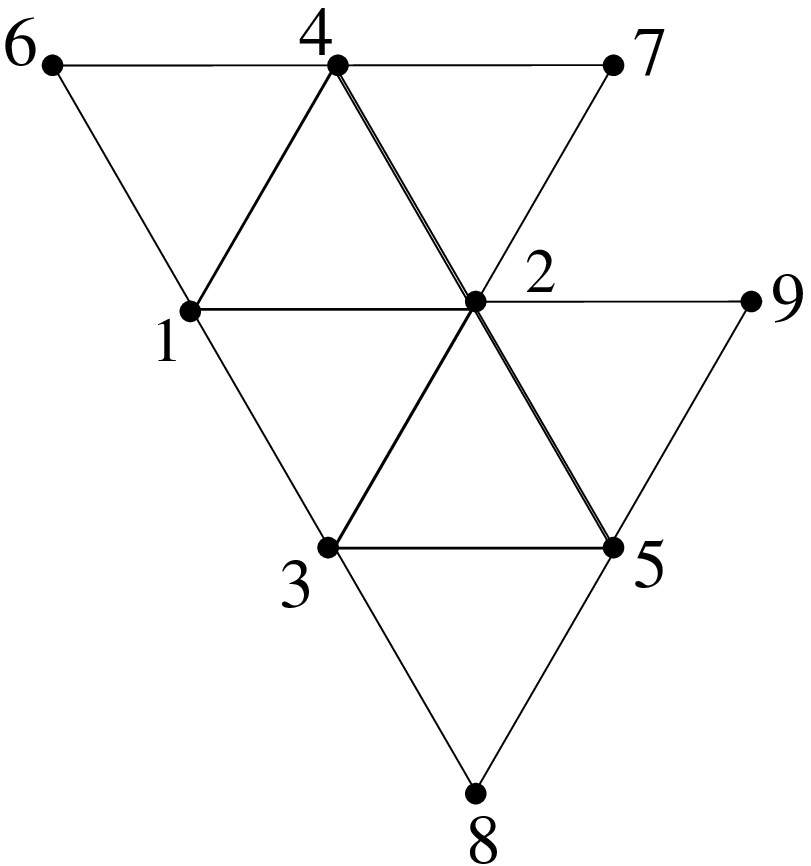}&
\includegraphics[width=0.2\textwidth]{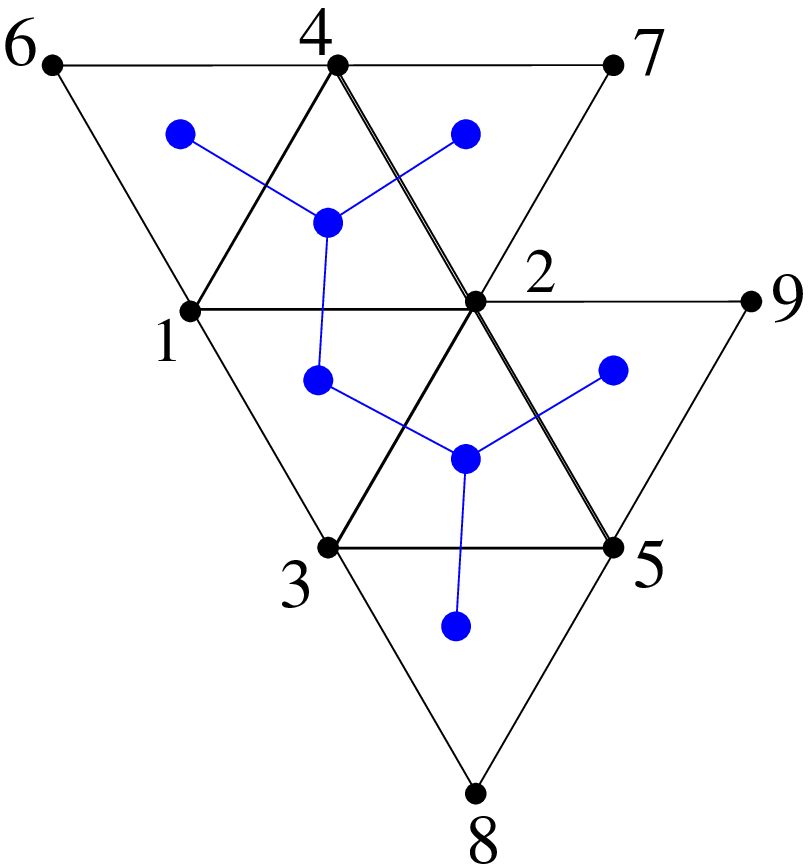}&
\includegraphics[width=0.5\textwidth]{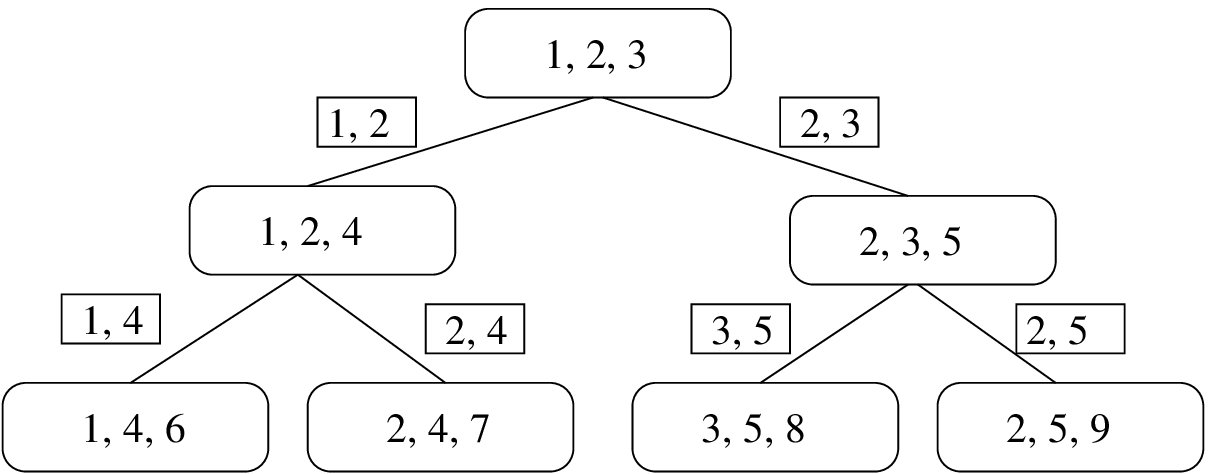}\\
(a) & (b) & (c)
\end{tabular}
\end{center}

\caption{(a) A decomposable graph on the set of vertices $V=\{1, 2, 3, 4, 5, 6, 7, 8, 9\}$ having treewidth 2.(b) A junction tree embedded on the decomposable graph representing the maximal cliques by blue dots and the separator sets by blue lines. (c) The corresponding junction tree representation of the decomposable graph with ovals representing the maximal cliques and the rectangles representing the corresponding separator set.}
\label{fig:sampleGraph}
\end{figure}

\section{Maximum Likelihood Decomposable Graphical Models}
\label{sec:MLDG}

In this section, we first review the relevant concepts of decomposable graphs and junction trees; for more details, see~\cite{bishop2006pattern,wainwright2008graphical,koller2009probabilistic}. We then cast the problem of learning the maximum likelihood bounded treewidth graph as a combinatorial optimization problem.

\subsection{Decomposable graphs and junction trees}
 We assume we are given an \emph{undirected} graph $G$ defined on the set of vertices $V = \{1, 2, \ldots, n\}$. Let $\mathcal{C}(G)$ denote the set of maximal cliques of $G$ (which we will refer to as cliques). We consider $n$ random variables $X_1,\dots,X_n$ (referred to as $X$), associated with each vertex indexed by $V$. For simplicity, they are assumed to be discrete, but this is not a restriction as  maximum likelihood will use only entropies that can be extended to differentiable entropies~\cite{cover2006elements}.
 
The distribution $p(x)$ of $X$ is said to factorize in the graph $G$, if and only if it factorizes as a product of potentials that depend only on the variables within maximal cliques.
 
A graph is said to be \emph{decomposable} if it has a junction tree, i.e., a  {spanning} tree whose vertices are maximal cliques of $G$ (i.e., $\mathcal{C}(G)$ is the vertex set) such that:
 
\begin{itemize}

\item the junction tree connects only cliques that have a common element (\emph{clique tree} property),
\item for any vertex $i \in V$, the subgraph of cliques containing $i$ is a tree (\emph{running intersection} property).
\end{itemize}

Let $\mathcal{T}(G)$ denote the edges of the junction tree over the set of 
cliques $\mathcal{C}(G)$. When the graph $G$ is decomposable, the distribution $p(x)$ of $X$ factorizes in $G$ if and only if it may be written as
\begin{equation}
p_G(x) = \frac{\prod_{C \in \mathcal{C}(G)} p_C(x_C)}{\prod_{(C,D) \in \mathcal{T}(G)} p_{C \cap D}(x_{C \cap D})} \text{ ,}
\label{eq:jpd}
\end{equation}
 where $x$ is an instance in the domain of $X$, which we denote by $\mathcal{X}$. $p_C(x_C)$ denotes the marginal distribution of random 
variables belonging to $C \in \mathcal{C}(G)$ and $p_{C \cap D}(x_{C \cap D})$ denotes the marginal distribution 
of random variables belonging to the {\em separator} set $C \cap D$, such that $(C,D) \in \mathcal{T}(G)$. See \myfig{sampleGraph}. The \emph{treewidth} of $G$ is the maximal size of the cliques in $G$, minus one.

An alternative representation of decomposable graphs may be obtained by considering \emph{hypergraphs}. Hypergraphs are defined by a base set $V$ and  a set of hyperedges, i.e., subsets of $V$. A hypergraph is said to be acyclic if and only if the resulting graph obtained by connecting all elements within an hyperedge is  decomposable. Unfortunately, the nice properties of acyclic graphs do not transfer to acyclic hypergraphs. Particulary, the matroid property, which allows exact greedy algorithms, does not hold. 
In \mysec{matroid2}, we will use a lesser known more general notion of acyclicity that will lead to exact greedy algorithms.

\subsection{Maximum likelihood estimation}
Given $N$ observations $x^1,\dots,x^N$  of $X$, we denote the corresponding empirical distribution of $X$ by $\hat{p}(x) = \frac{1}{N} \sum_{i=1}^N \delta( x = x^i)$. Given the structure of a decomposable graph $G$, the maximum likelihood distribution that factorizes in $G$ may be obtained by combining the marginal empirical distributions on all maximum cliques and their separators.

Let $\hat{p}(x)$ denote the empirical distribution and $\hat{p}_G(x)$ denotes the projected distribution on a decomposable graph $G$. 
Estimating the maximum likelihood decomposable graph which best approximates $\hat{p}$ is equivalent to finding the graph, $G$, which minimizes 
the KL-divergence between the target distribution and the projected distribution, $\hat{p}_G$, defined by $D(\hat{p}||\hat{p}_G)$. 
\begin{eqnarray}
D(\hat{p}||\hat{p}_G) & = & \sum_{x \in \mathcal{X}} \hat{p}(x) \log\frac{\hat{p}(x)}{\hat{p}_G(x)}\nonumber \\
          & \propto & \sum_{x \in \mathcal{X}}- \hat{p}(x) \log \hat{p}_G(x) \text{  as $\hat{p}(x)$ is independent of $G$ }\nonumber  \\
          & = & \sum_{x \in \mathcal{X}}- \hat{p}(x) \log  \frac{\prod_{C \in \mathcal{C}(G)} \hat{p}_C(x_C)}{\prod_{(C,D) \in \mathcal{T}(G)} \hat{p}_{C \cap D}(x_{C \cap D})} \text{ from \eq{jpd} }\nonumber\\
          & = & \sum_{x \in \mathcal{X}} \bigg(- \hat{p}(x) \log\prod_{C \in \mathcal{C}(G)} \hat{p}_C(x_C)\bigg)- \sum_{x \in X} \bigg(- \hat{p}(x)\log \prod_{(C,D) \in \mathcal{T}(G)} \hat{p}_{C \cap D}(x_{C \cap D})\bigg) \nonumber \\
          & = & \sum_{C \in \mathcal{C}(G)} \sum_{x \in \mathcal{X}} - \hat{p}(x) \log \hat{p}_C(x_C)- \sum_{(C,D) \in \mathcal{T}(G)} \sum_{x \in \mathcal{X}} - \hat{p}(x) \log \hat{p}_{C \cap D}(x_{C \cap D}) \nonumber \\
          & = & \sum_{C \in \mathcal{C}(G)} \sum_{x_C \in \mathcal{X}_C} - \hat{p}_C(x_C) \log \hat{p}_C(x_C)- \sum_{(C,D) \in \mathcal{T}(G)} \sum_{x_{C \cap D} \in \mathcal{X}_{C \cap D}} - \hat{p}_{C \cap D}(x_{C \cap D}) \log \hat{p}_{C \cap D}(x_{C \cap D}) \nonumber \\
          & = & \sum_{C \in \mathcal{C}(G)} H(C)- \sum_{(C,D) \in \mathcal{T}(G)} H({C \cap D}) \label{eq:GML} \text{ ,}
\end{eqnarray}
where $H(S)$ is the empirical entropy of the random variables indexed by the set $S \subseteq V$, defined by 
$H(S) = \sum_{x_S} \{ - \hat{p}_S(x_S) \log \hat{p}_S(x_S) \}$, and where the sum is taken over all possible values of $x_S$.

Note that in this paper, we will not be using a traditional model selection term~\cite{friedman2003being} as we will only consider models of low tree-width (with a bounded number of parameters).

\subsection{Combinatorial optimization problem}
\label{sec:comb}
We now consider  the problem of learning a decomposable graph of treewidth less than $k$. We assume that we are given all entropies $H(S)$ for  subsets $S$ of $V$ of cardinality less than $k+1$.

Since we do not add any model selection term, without loss of generality~\cite{SzantaiK12}, we restrict the search space to the space of \emph{maximal junction trees}, i.e., junction trees with $n-k$ maximal cliques of size $k+1$ and $n-k-1$ separator sets of  size $k$ between two cliques of size $k+1$. Our natural search spaces are thus characterized by
  $\mathcal{D}$, the set of all subsets of size $k+1$ of $V$, of cardinality ${n \choose k+1}$, and $\mathcal{E}$, the set of all potential edges in a junction tree, i.e., 
$\mathcal{E} = \{(C,D) \in \mathcal{D} \times \mathcal{D}, C \cap D \neq \varnothing, |C \cap D| = k\}$.
The cardinality of $\mathcal{E}$ is ${n \choose k+2}.{k+2 \choose 2}$ (number of subsets of size $k+2$ times the number of possibility of excluding  two elements to obtain a separator). 

A decomposable graph will be represented by a clique selection function $\tau: \mathcal{D} \to \{0,1\}$ and an edge selection function $\rho: \mathcal{E} \to \{0,1\}$ so that $\tau(C)=1$ if $C$ is a maximal clique of the graph and $\rho(C,D)=1$ if $(C,D)$ is an edge in the junction tree. Both $\rho$ and $\tau$ will be referred to as incidence \emph{functions} or incidence \emph{vectors}, when seen as elements of $\{0,1\}^\mathcal{D}$ and $\{0,1\}^\mathcal{E}$.

Thus, minimizing the problem defined in \eq{GML} is equivalent to minimizing,
\begin{equation}
\mathcal{P}(\tau, \rho)\! =\!  \sum_{C \in \mathcal{D}} H(C)\tau(C) -\!\! \!\! \sum_{(C,D) \in \mathcal{E}}\!\!\!\! H(C \cap D) \rho(C,D), \label{eq:primalCost}
\end{equation}
with the constraint that $(\tau, \rho)$ forms a decomposable graph. 

At this time, we have merely reparameterized the problem with the clique and edge selection functions. We now consider a set of necessary and sufficient conditions for the pair to form a decomposable graph. Some are convex in $(\tau,\rho)$, while some are not. The latter ones will be relaxed in \mysec{relax}.
From now on,  we denote by $1_{i \in C}$ the
 indicator function for $i \in C$ (i.e., it is equal to 1 if $i \in C$ and zero otherwise).

\begin{itemize}
\item{\em Covering $V$}: Each vertex in $V$ must be covered by atleast one of the selected cliques,
\begin{equation}
\forall i \in V, \ 
\sum_{C \in \mathcal{D}} 1_{i \in C} \tau(C) \geq 1.
\label{eq:constraint1}
\end{equation}

\item{\em Number of edges}: Exactly $n-k-1$ edges from $\mathcal{E}$ must be selected,
\begin{equation}
\sum_{(C,D) \in \mathcal{E}} \rho(C,D) = n-k-1.
\label{eq:constraint6}
\end{equation}

\item{\em Number of cliques}: Exactly $n-k$ cliques from $\mathcal{D}$ must be selected, 
\begin{equation}
\sum_{C \in \mathcal{D}} \tau(C) = n-k.
\label{eq:constraint7}
\end{equation}

\item{\em Running intersection property}: Every vertex, $i \in V$ must induce a tree, i.e., the number of selected edges containing the vertex, $i$, must be equal to the number of selected cliques containing the vertex, $i$ , minus one. 
\begin{equation}
\!\!\!\!\!\!\forall i \in V, \sum_{(C,D) \in \mathcal{E}} \!\!1_{i \in (C \cap D)} \rho(C,D) - \sum_{C \in \mathcal{D}} 1_{i \in C} \tau(C) +  1 = 0.
\label{eq:constraint3}
\end{equation}

\item{\em Edges between selected cliques}: An edge in $\mathcal{E}$ is selected by $\rho$ only if the cliques it is incident on is selected by $\tau$. 
\begin{equation}
\forall C \in \mathcal{D}, \  \tau(C) = \max_{ D \in \mathcal{D},  \    (C,D) \in \mathcal{E} }\rho(C,D).
\label{eq:constraint444}
\end{equation}

\item{\em Acyclicity of $\rho$}: $\rho$ selects edges in $\mathcal{E}$ such that they do not have loops, e.g., the blue lines in \myfig{sampleGraph}-(b) cannot form loops,
\begin{equation}
\text{$\rho$ represents a subforest of the graph $(\mathcal{D}, \mathcal{E})$.}
\label{eq:constraint2}
\end{equation}

\item{\em Acyclicity of $\tau$}: $\tau$ selects the hyperedges of $V$ in $\mathcal{D}$ such that they are acyclic, i.e.,
\begin{equation}
\text{$\tau$ represents an acyclic hypergraph of $(V, \mathcal{D})$.}
\label{eq:constraintww2}
\end{equation}

\end{itemize}

The above constraints encode the classical definition of junction trees. Thus our combinatorial problem is exactly equivalent to
minimizing $P(\tau,\rho)$ defined in \eq{primalCost}, subject to the constraints in
in  \eq{constraint1},  \eq{constraint6}, \eq{constraint7}, \eq{constraint3}, \eq{constraint444}, \eq{constraint2} and \eq{constraintww2}. Note that the constraint  \eq{constraintww2} that $\tau$ represents an acyclic hypergraph is implied by the other constraints.

\myfig{sampleSpace} shows clique and edge selections in blue which satisfy all these constraints and hence represent a decomposable graph. The clique and edge 
selections in red violates at least one of these constraints.

\begin{figure}[t]
\begin{center}
\includegraphics[width=0.45\textwidth]{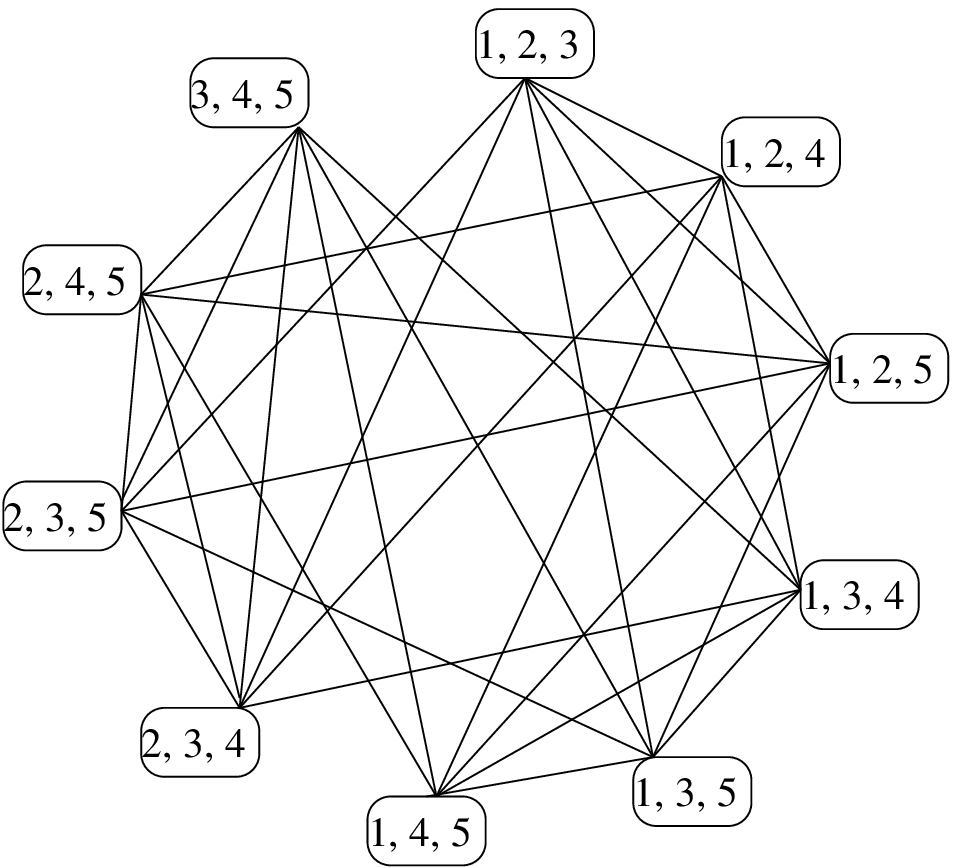}
\includegraphics[width=0.45\textwidth]{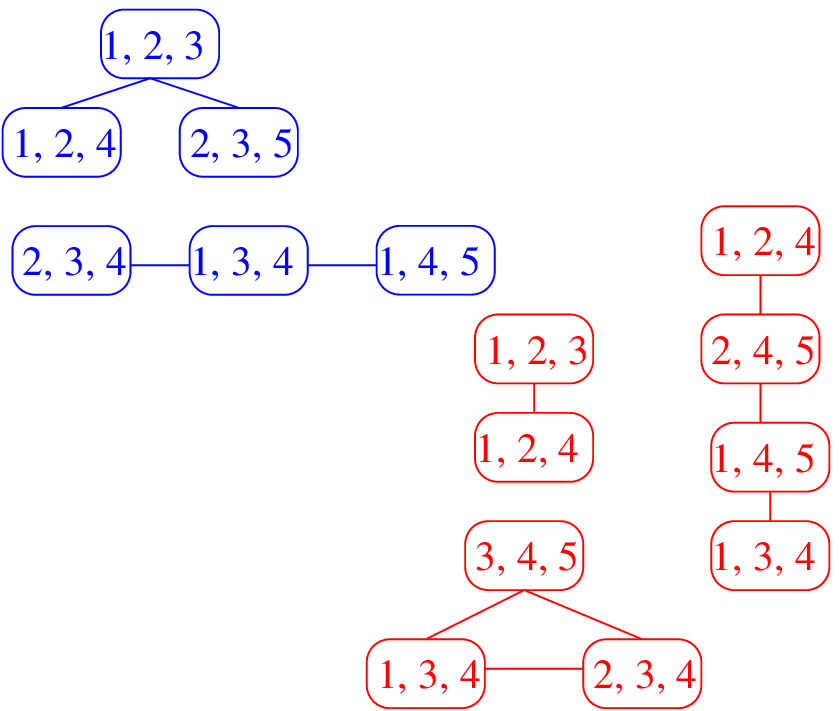}
\end{center}
\caption{Space of cliques $\mathcal{D}$ denoted by ovals and the space of feasible edges $\mathcal{E}$ denoted by lines 
for $V=\{1, 2, 3, 4, 5\}$ and treewidth 2(in Black). Clique and edge selections in blue represent decomposable graphs while those 
in red denote graphs that are not decomposable (best seen in color).}
\label{fig:sampleSpace}
\end{figure}

\section{Convex Relaxation}
\label{sec:relax}

We now provide a convex relaxation of the combinatorial problem defined in \mysec{comb}. The covering constraint in \eq{constraint1}, the number of edges and the number of cliques constraints in \eq{constraint6} and \eq{constraint7} respectively, and the running intersection property in \eq{constraint3} are already convex in~$(\tau,\rho)$.

The constraint in \eq{constraint444} that $\forall C \in \mathcal{D}, \  \tau(C) = \max_{ D \in \mathcal{D}, \    (C,D) \in \mathcal{E} }\rho(C,D)$ may be relaxed into:

\begin{itemize}
\item {\em Edge constraint}: selection of edges only if the both the incident cliques are selected, i.e.,
\begin{equation}
\forall C \in \mathcal{D},\  \forall (C,D) \in \mathcal{E}, \ \rho(C,D) \leq \tau(C).
\label{eq:constraint4}
\end{equation}

\item{\em Clique constraint}: selection of a clique if at least an edge incident on it is selected, i.e.,
\begin{equation}
\forall C \in \mathcal{D}, \ \tau(C) \leq \sum_{(C,D)  \in \mathcal{E}} \rho(C,D).
\label{eq:constraint5}
\end{equation}
\end{itemize}

We now consider the two acyclicity constraints in \eq{constraint2} and \eq{constraintww2}.

\subsection{Forest polytope}
\label{sec:matroid1}

Given the graph $(\mathcal{D}, \mathcal{E})$, the \emph{forest polytope} is the convex hull of all incidence vectors $\rho$ of subforests of $(\mathcal{D}, \mathcal{E})$. Thus, it is exactly the convex hull of all $\rho: \mathcal{E} \to \{0,1\}$ such that $\rho$ satisfies the constraint in \eq{constraint2}. We may thus relax it into:

\begin{list}{\labelitemi}{\leftmargin=1.1em}
   \addtolength{\itemsep}{-.45\baselineskip}
   
\item[--]
{\em Tree constraint}:

\begin{equation}
\text{$\rho$ is in the forest polytope of $(\mathcal{D}, \mathcal{E})$.}
\label{eq:constraint2C}
\end{equation}
\end{list}

While the new constraint in \eq{constraint2C} forms a convex constraint, it is crucial that it may be dealt with empirically in polynomial time. This is made possible by the fact that one may maximize any linear function over that polytope. Indeed, for a weight function $w: \mathcal{E} \times \mathcal{E} \to \rb$, maximizing $\sum_{(C,D) \in \mathcal{E}} w(C,D) \rho(C,D)$ is exactly a maximum weight spanning forest problem, and its solution may be obtained by Kruskal's algorithm, i.e., (a) order all (potentially negative) weights $w(C,D)$ and (b) greedily select edges $(C,D)$, i.e., set $\rho(C,D)=1$, with higher weights first, as long as they form a forest and as long as the weights are positive. When we add the restriction that the number of edges is fixed (in our case $n-k-1$), then the algorithm is stopped when exactly the desired number of edges is selected (whether the corresponding weights are positive or not). See, e.g.,~\cite{schrijver2004combinatorial}.

The polytope defined above may also be defined as the independence polytope of the graphic matroid, which is the traditional reason why the greedy algorithm is exact~\cite{schrijver2004combinatorial}. In the next section, we show how this can be extended to hypergraphs.

\subsection{Hypergraphic matroid}
\label{sec:matroid2}

Given the set of potential cliques $\mathcal{D}$ over $V$, we consider functions $\tau: \mathcal{D} \to \{0,1\}$ that are equal to one when a clique is selected, and zero otherwise. Ideally, we would like to treat the acyclicity of the associated hypergraph in a similar way than for regular graphs.
However, the set of acyclic subgraphs of the hypergraph defined from $\mathcal{D}$ does not form a matroid, and thus the polytope defined as the convex hull of all incidence vectors/functions of acyclic hypergraphs may be defined, but 
the greedy algorithm is not applicable. In order to define what is referred to as the \emph{hypergraphic matroid}, one needs to relax the notion of acyclicity.

We now follow~\cite{lorea1975hypergraphes,frank2003decomposing,fukunaga2010computing} and define a different notion of acyclicity for hypergraphs. An hypergraph $(V,\mathcal{F})$ is an \emph{hyperforest} if and only if
 for all $A \subset V$, the number of hyperedges in $\mathcal{F}$ contained in $A$ is less than $|A|-1$. A non-trivially equivalent definition is that we can select two elements in each hyperedege so that the graph with vertex set $V$ and with edge set composed of these pairs  is a forest.

Given an hypergraph with hyperedge set $\mathcal{D}$, the set of sub-hypergraphs which are hyperforests forms a matroid. This implies that given a weight function on $\mathcal{D}$, one may find the maximum weight hyperforest with a greedy algorithm that ranks all hyperedges and select them as long as they don't violate acyclicity (with the notion of acyclicity just defined and for which we exhibit a test below).

\paragraph{Testing acyclicity.} 
Checking acyclicity of an hypergraph $(V,\mathcal{F})$ (which is needed for the greedy algorithm above) may be done by minimizing with respect to $ A \subset V$
$$
|A| - \sum_{ G \in \mathcal{F}} 1_{ G \subset A}.
$$
The hypergraph is an hyperforest if and only if the minimum is greater or equal to one. The minimization of this function may be cast a min-cut/max-flow problem as follows~\cite{fukunaga2010computing}:

\begin{list}{\labelitemi}{\leftmargin=1.1em}
   \addtolength{\itemsep}{-.45\baselineskip}

\item[--] single source, single sink, one node per hyperedge in $\mathcal{F}$, one node per vertex in $V$,
\item[--] the source points towards each hyperedge with unit capacity,
\item[--] each hyperedge points towards the vertices it contains, with infinite capacity,
\item[--] each vertex points towards the sink, with unit capacity.
\end{list}
 
\paragraph{Link with decomposability.} 
 
 The hypergraph obtained from the maximal cliques of a decomposable graph can easily be seen to be an hyperforest. But the converse is not true.

\paragraph{Hyperforest polytope.}
We can now naturally define the hyperforest polytope as the convex hull of all incidence vectors of hyperforests. Thus the constraint in \eq{constraintww2} may be relaxed into:

\begin{list}{\labelitemi}{\leftmargin=1.1em}
   \addtolength{\itemsep}{-.45\baselineskip}

\item[--]
{\em Hyperforest constraint}:

\begin{equation}
\text{$\tau$ is in the hyperforest polytope of $(V, \mathcal{D})$.}
\label{eq:constraintww2C}
\end{equation}
\end{list}

\subsection{Relaxed optimization problem}

We can now formulate our combinatorial problem from the constraints in \eq{constraint1}, \eq{constraint6}, \eq{constraint7}, \eq{constraint3}, \eq{constraint4}, \eq{constraint5}, \eq{constraint2C} and \eq{constraintww2C} as follows
\begin{equation}
    \min \mathcal{P}(\tau, \rho)\text{ subject to }  \left\{
    \begin{array}{l}
    \tau \in \{0,1\}^{\mathcal{D}},\\
    \rho \in \{0,1\}^{\mathcal{E}},\\
    \forall i \in V, \sum_{C \in \mathcal{D}}1_{i \in C} \tau(C) \geq 1, \\
    \sum_{(C,D) \in \mathcal{E}} \rho(C,D) = n-k-1, \\
    \sum_{C \in \mathcal{D}} \tau(C) = n-k, \\
    \forall i \in V,  \sum_{(C,D) \in \mathcal{E}} 1_{i \in (C \cap D)} \rho(C,D)  - \sum_{C \in \mathcal{D}} 1_{i \in C} \tau(C) + 1= 0, \\
    \forall C \in \mathcal{D}, \forall (C,D) \in \mathcal{E}, \rho(C,D) \leq \tau(C),\\
    \forall C \in \mathcal{D}, \tau(C) \leq \sum_{(C,D)  \in \mathcal{E}} \rho(C,D),\\
    \text{$\rho$ is in the forest polytope of $(\mathcal{D}, \mathcal{E})$}, \\
    \text{$\tau$ is in the hyperforest polytope of $(V, \mathcal{D})$.}
    \end{array} \right.
    \label{eq:primal}
\end{equation}

All constraints except the integrality constraints are convex. 
Let $\tau${\em-relaxed primal} be the partially relaxed primal optimization problem formed by relaxing only the integral constraint 
on $\tau$ in  \eq{primal}, i.e., replacing $\tau \in \{0,1\}^\mathcal{D}$ by $\tau \in [0,1]^\mathcal{D}$. Note that this is not a convex problem due to the remaining integral constraint on $\rho$, but it remains equivalent to the original problem as the following proposition shows.
\begin{proposition}
The combinatorial problem in \eq{primal}  and the $\tau$-relaxed primal problem are equivalent.
\end{proposition}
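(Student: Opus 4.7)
The plan is to show that in the $\tau$-relaxed problem any feasible $(\tau,\rho)$ automatically has $\tau \in \{0,1\}^{\mathcal{D}}$, so dropping the integrality on $\tau$ changes nothing. The strategy exploits the fact that $\rho$ is still forced to be $\{0,1\}$-valued, together with the two linking constraints \eq{constraint4} and \eq{constraint5} that were introduced as relaxations of \eq{constraint444}; these two inequalities sandwich each $\tau(C)$ between integer quantities determined by $\rho$.

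Concretely, fix a feasible pair $(\tau,\rho)$ of the $\tau$-relaxed problem. Since $\rho \in \{0,1\}^{\mathcal{E}}$, I will split $\mathcal{D}$ into the set $T$ of cliques incident to some selected edge and its complement. For $C \in T$, pick $(C,D) \in \mathcal{E}$ with $\rho(C,D)=1$; then \eq{constraint4} gives $\tau(C) \geq 1$, which combined with $\tau(C) \leq 1$ yields $\tau(C)=1$. For $C \notin T$, the sum $\sum_{(C,D) \in \mathcal{E}} \rho(C,D)$ vanishes, so \eq{constraint5} forces $\tau(C)=0$. Thus $\tau$ is the incidence vector of $T$ and is automatically integral.

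To finish the equivalence cleanly, I will check that the cardinality constraint \eq{constraint7} is consistent with this, which also shows the forest selected by $\rho$ must actually be a tree. Since $\rho$ is an integral point of the forest polytope of $(\mathcal{D},\mathcal{E})$ with $\sum \rho = n-k-1$, it is a subforest with $n-k-1$ edges; if it has $c$ connected components, then $|T| = (n-k-1)+c$. Using $\sum_{C \in \mathcal{D}} \tau(C) = |T| = n-k$ forces $c=1$, but I do not even need this for the proposition itself, since once $\tau$ is shown to be integral the two optimization problems have identical feasible sets and identical objective values.

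I do not anticipate any real obstacle: the only subtlety is making sure the two inequalities \eq{constraint4} and \eq{constraint5} are strong enough, without invoking the tighter nonconvex \eq{constraint444}, to drive each $\tau(C)$ to $\{0,1\}$, and this is exactly what the case analysis above does. The converse inclusion (feasibility in the integral problem implies feasibility in the $\tau$-relaxed one) is immediate from $\{0,1\} \subset [0,1]$, so the two optimization problems have the same optimum and the same optimal set.
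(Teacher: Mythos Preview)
Your argument is correct and is essentially the same as the paper's: both proofs use only the integrality of $\rho$ together with the sandwiching constraints \eq{constraint4} and \eq{constraint5} to force $\tau(C)\in\{0,1\}$. The paper phrases it by contradiction (assume $0<\tau(C)<1$, then \eq{constraint4} kills all incident $\rho(C,D)$, which makes \eq{constraint5} fail), whereas you do the equivalent direct case split on whether $C$ is incident to a selected edge; the extra remark about $c=1$ is fine but unnecessary for the proposition.
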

\begin{proof}
Let us assume $(\tau^*, \rho^*)$ be a feasible solution for the relaxed primal with $0 < \tau^*(C) < 1$ for some $C \in \mathcal{D}$. 
The edge constraint in \eq{constraint4} ensures that there are no incident edges on $C$ selected by 
$\rho^*$ (as $\rho^*$ is integral). This violates the clique constraint in \eq{constraint5}. Therefore,  
the feasible solutions of relaxed primal are integral. Hence the optimal solutions of the primal and 
the relaxed primal are identical.
\end{proof}

The {\em convex relaxation} for the primal optimization problem formed by relaxing the integral constraint on both $\tau$
and $\rho$ can now be defined as 
\begin{equation}
    \min \mathcal{P}(\tau, \rho)\text{ subject to }  \left\{
    \begin{array}{l}
    \tau \in [0,1]^{\mathcal{D}},\\
    \rho \in [0,1]^{\mathcal{E}},\\
    \forall i \in V, \sum_{C \in \mathcal{D}}1_{i \in C} \tau(C) \geq 1, \\
    \sum_{(C,D) \in \mathcal{E}} \rho(C,D) = n-k-1, \\
    \sum_{C \in \mathcal{D}} \tau(C) = n-k, \\
    \forall i \in V,  \sum_{(C,D) \in \mathcal{E}} 1_{i \in (C \cap D)} \rho(C,D)  - \sum_{C \in \mathcal{D}} 1_{i \in C} \tau(C) + 1= 0, \\
    \forall C \in \mathcal{D}, \forall (C,D) \in \mathcal{E}, \rho(C,D) \leq \tau(C),\\
    \forall C \in \mathcal{D}, \tau(C) \leq \sum_{(C,D)  \in \mathcal{E}} \rho(C,D),\\
    \text{$\rho$ is in the forest polytope of $(\mathcal{D}, \mathcal{E})$}, \\
    \text{$\tau$ is in the hyperforest polytope of $(V, \mathcal{D})$.}
    \end{array} \right.
    \label{eq:cprimal}
\end{equation}

\section{Solving the dual problem}
\label{sec:LD}

We now show how the convex problem may be minimized in polynomial time. Among the constraints of our convex problem in \eq{primal}, some are simple linear constraints, some are complex constraints depending on the forest and hyperforest polytopes defined in \mysec{relax}. We will define a dual optimization problem by introducing the least possible number of Lagrange multipliers (a.k.a.~dual variables) \cite{Ber99} so that the dual function (and a supergradient) may be computed and maximized efficiently. We introduce the following dual variables:

\begin{list}{\labelitemi}{\leftmargin=1.1em}
   \addtolength{\itemsep}{-.45\baselineskip}

\item[--] Set cover constraints in \eq{constraint1}: $\gamma \in \rb_+^V$.
\item[--] Running intersection property in \eq{constraint3}: $\mu \in \rb^V$.
\item[--] Edge constraints in \eq{constraint4}:  $\lambda \in \mathbb{R}_+^{ 2\mathcal{E}}$.
\item[--] Clique constraints in \eq{constraint5}: $\eta \in \mathbb{R}_+^\mathcal{D}$.
\end{list}
Therefore, the dual variables are $(\gamma, \mu, \lambda, \eta)$. Let $\mathcal{L}(\tau, \rho, \gamma, \mu, \lambda, \eta)$ be the Lagrangian relating the primal and dual variables. It is derived from the primal 
cost function defined in \eq{primalCost} along with the covering constraint, running intersection property, the edge and the clique constraints
defined in \eq{constraint1}, \eq{constraint3}, \eq{constraint4} and \eq{constraint5} respectively. The Lagrangian can be computed from the dual
variables $(\gamma, \mu, \lambda, \eta)$ as follows:
\begin{eqnarray}
&&\mathcal{L}(\tau, \rho, \gamma, \mu, \lambda, \eta) \nonumber \\
\nonumber \\
& = &\sum_{C \in \mathcal{D}} H(C) \tau(C) - \sum_{(C,D) \in \mathcal{E}} H(C \cap D) \rho(C,D) \nonumber \\
& + &\sum_{i \in V} \gamma_i \bigg(1 - \sum_{C \in \mathcal{D}} 1_{i \in C} \tau(C)\bigg) + \sum_{i \in V} \mu_i \bigg( \sum_{(C,D) \in \mathcal{E}} 1_{i \in (C \cap D)} \rho(C,D) - \sum_{C \in \mathcal{D}} 1_{i \in C} \tau(C) + 1\bigg)   \nonumber \\
& + &\sum_{C \in \mathcal{D}} \sum_{(C,D) \in \mathcal{E}}\lambda_{CD}\bigg( \rho(C,D) - \tau(C)\bigg) + \sum_{C \in \mathcal{D}} \eta_C \bigg( \tau(C) - \sum_{(C,D) \in \mathcal{E}} \rho(C,D)\bigg)\nonumber \\
& = &\sum_{C \in \mathcal{D}}\bigg( H(C) - \sum_{i \in C}(\mu_i + \gamma_i) - \sum_{(C,D) \in \mathcal{E}} \lambda_{CD} + \eta_C\bigg) \tau(C) \nonumber \\
& - &\sum_{(C,D) \in \mathcal{E}}\bigg( H(C \cap D) - \sum_{i \in (C \cap D)}\mu_i - \lambda_{CD} - \lambda_{DC} + \eta_C + \eta_D\bigg)\rho(C,D)  + \sum_{i \in V} (\mu_i + \gamma_i), \nonumber \\
\label{eq:lagrangian}
\end{eqnarray}
with the following {\em dual constraints} on the Lagrange multipliers 
\begin{eqnarray}
\forall i \in V  ,&&  \gamma_i \geq 0, \nonumber \\
\forall C \in \mathcal{D}  ,&  \forall (C,D) \in \mathcal{E},& \lambda_{CD} \geq 0, \nonumber \\
\forall C \in \mathcal{D}  ,&&  \eta_C \geq 0. \label{eq:dualconstraint}
\end{eqnarray} 

We can now derive a dual optimization problem with $\mathcal{Q}(\gamma, \mu,\lambda,\eta)$ represent the dual cost function,
which can be derived from the Lagrangian in \eq{lagrangian}. We use the  the number of edges constraint, the number of cliques constraint, tree constraint and hyperforest constraint given by  
\eq{constraint6}, \eq{constraint7}, \eq{constraint2C} and \eq{constraintww2C} respectively in deriving the dual as follows:

\begin{eqnarray}
&&\mathcal{Q}(\gamma, \mu, \lambda, \eta) \nonumber \\
\nonumber \\
& = &\inf_{\substack{\tau(C) \in [0,1]^{\mathcal{D}}\\ \sum_{C \in \mathcal{D}} \tau(C) = n-k \\ \tau \in \text{ hyperforest polytope of }(V, \mathcal{D})}} \bigg( H(C) - \sum_{i \in C} (\mu_i + \gamma_i) - \sum_{(C,D) \in \mathcal{E}} \lambda_{CD}+ \eta_C\bigg) \tau(C) \nonumber \\
                         & - &\sup_{\substack{\rho \in [0,1]^{\mathcal{E}} \\ \sum_{(C,D) \in \mathcal{E}} \rho(C,D) = n-k-1 \\ \rho \in \text{forest polytope of }(\mathcal{D},\mathcal{E})}}  \sum_{(C,D) \in \mathcal{E}} \bigg( H(C \cap D) - \sum_{i \in (C \cap D)} \mu_i - \lambda_{CD} - \lambda_{DC} + \eta_C + \eta_D \bigg) \rho(C,D) \nonumber \\
                         & + & \sum_{i \in V} (\mu_i + \gamma_i). \label{eq:dualcost1}
\end{eqnarray}

 It is decomposed in three parts defined  in \eq{MSC}, \eq{MWF} and \eq{dual3} respectively :
\begin{equation}
\mathcal{Q}(\gamma, \mu, \lambda, \eta) =  q_1(\gamma, \mu, \lambda, \eta) + q_2(\gamma, \mu, \lambda, \eta) + q_3(\gamma, \mu, \lambda, \eta), 
\end{equation}
where 
\begin{eqnarray}
\!\!\!\!  q_1(\gamma, \mu, \lambda, \eta) \!\!\!\! & = &\inf_{\substack{\tau(C) \in [0,1]^{\mathcal{D}}\\ \sum_{C \in \mathcal{D}} \tau(C) = n-k \\ 
\tau \in \text{ hyperforest polytope of }(V, \mathcal{D})}
} \sum_{C \in \mathcal{D}}  \bigg( H(C) - \sum_{i \in C} (\mu_i + \gamma_i) - \sum_{(C,D) \in \mathcal{E}} \lambda_{CD}+ \eta_C\bigg) \tau(C). \label{eq:MSC} \\
\!\!\!\!  q_2(\gamma, \mu, \lambda, \eta)  \!\!\!\!& = & -  \!\!\!\!\!\!\!\!\!\!\!\!\!\!\!\!\!\sup_{\substack{\rho \in [0,1]^{\mathcal{E}} \\ \sum_{(C,D) \in \mathcal{E}} \rho(C,D) = n-k-1 \\ \rho \in \text{ forest  polytope of }(\mathcal{D},\mathcal{E})}}\!\sum_{(C,D) \in \mathcal{E}}\!\! \bigg( H(C \cap D) - \!\!\!\sum_{i \in (C \cap D)}  \mu_i - \lambda_{CD} - \lambda_{DC} + \eta_C + \eta_D \bigg) \rho(C,D). \nonumber \\
\label{eq:MWF}\\
\!\!\!\!  q_3(\gamma, \mu, \lambda, \eta)  \!\!\!\! & = & \sum_{i \in V} (\mu_i + \gamma_i). \label{eq:dual3}
\end{eqnarray}

Therefore, the dual optimization problem using the dual cost function defined in \eq{dualcost1} and the dual constraints defined in 
\eq{dualconstraint} is given by
\begin{equation}
    \max \mathcal{Q}(\gamma, \mu, \lambda, \eta)\text{ subject to }  \left\{
    \begin{array}{l}
    \forall i \in V, \gamma_i \geq 0, \\
    \forall C \in \mathcal{D}, \forall (C,D) \in \mathcal{E}, \lambda_{CD} \geq 0, \\
    \forall C \in \mathcal{D}, \eta_C \geq 0. \\
    \end{array} \right.
    \label{eq:dual}
\end{equation}

The dual functions  $q_1(\gamma, \mu, \lambda, \eta)$  and $q_2(\gamma, \mu, \lambda, \eta)$ may be computed using the greedy algorithms defined in \mysec{matroid1} and \mysec{matroid2};
$q_1$ can be
evaluated in $O(r \log(r))$, where $r$ is the cardinality of the space of cliques, $\mathcal{D}$, i.e., ${n \choose k+1}$ and $q_2$ can be evaluated in $O(m \log(m))$, where $m$ is the cardinality of
feasible edges, $\mathcal{E}$,  i.e., ${n \choose k+2}.{k+2 \choose 2}$. This complexity is due to sorting the edges and hyperedges based on their weights. This leads to an overall complexity of $O(k^3 n^{k+2} \log n)$ per iteration of the projected supergradient method which we now present.

\begin{algorithm*}[htb!]
\caption{Projected Supergradient}
\label{alg:projSubgrad}
\begin{algorithmic}
    \STATE {\bfseries Input:} clique and edge entropies H, step-size constant $a$ and number of iterations T
    \STATE {\bfseries Output:} sequence of clique and edge selections over iterations $(\tau^t,\rho^t)$
    \STATE Initialize $\gamma^0=0$, $\mu^0 = 0$, $\lambda^0 = 0$, $\eta^0 = 0$
    \FOR {$t=0$ {\bfseries to} $T$}
    \STATE {\bfseries solve} \eq{MSC} and evaluate $q_1(\gamma^t, \mu^t, \lambda^t, \eta^t)$ to obtain $\tau^t$
    \STATE {\bfseries solve} \eq{MWF} and evaluate $q_2(\gamma^t, \mu^t, \lambda^t, \eta^t)$ to obtain $\rho^t$
   \STATE {\bfseries update} dual variables, $(\gamma^{t+1}, \mu^{t+1}, \lambda^{t+1}, \eta^{t+1})$ using supergradients and stepsize: $\alpha_t = \frac{a}{{\sqrt t}}$\\
    $\gamma^{t+1}_i     = \Big[\gamma^t_i + \alpha_t \Big(1 - \sum_{C \in \mathcal{D}} 1_{i \in C} \tau^t(C)\Big)\Big]^+ $\\
    $\mu^{t+1}_i        = \mu^t_i + \alpha_t \Big(\sum_{(C,D) \in \mathcal{E}} 1_{i \in (C \cap D)} \rho^t(C,D) - \sum_{C \in \mathcal{D}} 1_{i \in C} \tau^t(C) + 1 \Big)$\\
    $\lambda^{t+1}_{CD} = \Big[\lambda^{t}_{CD} + \alpha_t \Big(\tau^t(C) - \rho^t(C,D)\Big)\Big]^+$\\
    $\eta^{t+1}_{C}     = \Big[\eta^t_C + \alpha_t \Big(\sum_{(C,D) \in \mathcal{E}} \rho^t(C,D) - \tau^t(C)\Big)\Big]^+ $
    \ENDFOR
\end{algorithmic}
\end{algorithm*}

\paragraph{Projected supergradient ascent.} The dual optimization problem defined by maximizing $Q(\gamma, \mu, \lambda, \eta)$ can be solved using the projected supergradient method. 
In each iteration  
$t$ of the algorithm, the dual cost function, $Q(\gamma^t, \mu^t, \lambda^t, \eta^t)$, is evaluated through estimation of $q_1$ and $q_2$ by solving \eq{MSC} and \eq{MWF} respectively. 
In the process of solving these equations, the corresponding primal variables $(\tau^t, \rho^t)$ are also estimated and allows the computations of the supergradients of $Q$ (i.e., opposites of  subgradients of $-Q$)~\cite{Ber99}. As shown in Algorithm~\ref{alg:projSubgrad}, a step is made toward the direction of the supergradient and projection onto the positive orthant is performed for dual variables that are constrained to be nonnegative. With step sizes $\alpha_t$ proportional to $1/\sqrt{t}$, this algorithm is known to converge to a dual optimal solution~\cite{nedic2009} at rate  $1/\sqrt{t}$. Moreover, the average of all visited primal variables, i.e., after $t$ steps, $({\hat \tau_t}, {\hat \rho_t}) = \frac{1}{t} \sum_{u=0}^t (\tau^u, \rho^u)$ is known to be approximately primal-feasible (i.e., it satisfies all the linear constraints that were dualized up to a small constant that is also going to zero at rate $1/\sqrt{t}$). The convergence to primal feasibility is illustrated in
\myfig{results}(a), where, on one of the synthetic examples from \mysec{results}, the different constraint violations. Note that these are not the number of each of these constraints violated but the maximum value by
which they are violated. It can be observed that the constraint violations reduce to zero over iterations.

\begin{proposition}
If $k=1$, the convex relaxation in \eq{cprimal} is equivalent to \eq{primal}.
\end{proposition}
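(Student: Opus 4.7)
The plan is to show that the convex relaxation in \eq{cprimal} always admits an integer optimum, so its value coincides with that of \eq{primal}. The first step is to eliminate $\rho$ from the objective. Since for $k=1$ every $C \in \mathcal{D}$ is an edge $\{a,b\}$ of $K_n$ and every separator $C \cap D$ is a single vertex $i \in V$, regrouping the second sum in $\mathcal{P}$ by vertex gives
\begin{align*}
\sum_{(C,D) \in \mathcal{E}} H(C \cap D)\, \rho(C,D) = \sum_{i \in V} H(X_i) \sum_{(C,D):\, i \in C \cap D} \rho(C,D).
\end{align*}
Using the running intersection constraint \eq{constraint3} to rewrite the inner sum as $\sum_{C \ni i} \tau(C)-1$ and substituting back collapses the objective to
\begin{align*}
\mathcal{P}(\tau,\rho) = -\sum_{C=\{a,b\}} I(X_a;X_b)\, \tau(C) + \sum_{i \in V} H(X_i),
\end{align*}
a linear function of $\tau$ alone.

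Next I would identify the feasible set of $\tau$ with the spanning tree polytope of $K_n$. When $k=1$ the hyperforest polytope of $(V,\mathcal{D})$ is exactly the graphic forest polytope on $K_n$, and combining it with $\sum_C \tau(C)=n-1$ yields the classical spanning tree polytope; the coverage inequality and the box constraint $\tau(C) \leq 1$ are both implied by the forest inequalities applied to $A = V \setminus \{i\}$ and to $|A|=2$ respectively. This polytope is integral, so the reduced linear objective attains its minimum on it at an integer vertex $\tau^*$, the indicator of some spanning tree $T^*$ on $V$.

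To conclude I would lift $\tau^*$ to an integer $\rho^*$ witnessing that this value is attained by a pair feasible for \eq{primal}. I would build a junction tree of $T^*$ in the usual way: for each $i \in V$ of degree $d_i$ in $T^*$, pick an arbitrary spanning tree of the clique formed in the line graph of $T^*$ by the $d_i$ cliques containing $i$, and let $\rho^*$ be the indicator of the union. A degree count gives $\sum_i(d_i-1) = n-2$ junction tree edges, matching the cardinality constraint, and $(\tau^*,\rho^*)$ then satisfies all the constraints of the combinatorial primal. Since the objective depends only on $\tau$, this integer pair achieves the same value as any fractional optimum.

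The main obstacle is this last step: one has to check that the per-vertex stars chosen locally glue into a single forest on $(\mathcal{D},\mathcal{E})$ that respects running intersection. For $k=1$ this is standard (for example, order the edges of $T^*$ by a DFS and connect each non-root edge to its parent edge, which by construction shares a vertex); the rest of the argument is a short manipulation in the $\tau$-coordinates.
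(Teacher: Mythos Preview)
Your argument is correct and reaches the same endpoint as the paper---reducing the $k=1$ problem to maximizing $\sum_C I(C)\tau(C)$ over the spanning-tree polytope of $K_n$, whose integrality then yields the Chow--Liu tree as a common optimum of \eq{primal} and \eq{cprimal}---but you get there by a different route. The paper works on the \emph{dual} side: it exhibits the specific multipliers $\mu_i=H(\{i\})$, $\gamma=\lambda=\eta=0$, observes that at these values the $\rho$-weights in $q_2$ vanish and the $\tau$-weights in $q_1$ become $-I(C)$, so the dual value equals the Chow--Liu optimum; weak duality then closes the gap. You instead stay on the \emph{primal} side: you use the running-intersection equality directly to eliminate $\rho$ from $\mathcal{P}$, note that the hyperforest constraint for $k=1$ is literally the graphic forest polytope, and sandwich the relaxed optimum between the spanning-tree minimum and an explicit integer feasible pair $(\tau^*,\rho^*)$.

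What each buys: the paper's approach reuses the dual machinery already developed in \mysec{LD} and avoids having to verify projection/lifting details, but it leaves the optimality of the exhibited multipliers implicit (it follows only after the final comparison with the primal). Your approach is more self-contained and does not require Lagrangian duality at all; the extra work is the lifting step, which you correctly flag as the only nontrivial part. Your DFS-based construction (connect each non-root edge of $T^*$ to its parent edge) is the standard way to build a junction tree of a tree and satisfies running intersection because all cliques through a vertex $i$ are adjacent to the edge joining $i$ to its parent; just take a little care at the root (connect the root's incident edges among themselves, e.g.\ as a path).
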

\begin{proof}
If $k=1$, all the cliques in the clique space contain only 2 vertices, i.e., $\forall C \in \mathcal{D}, |C| = 2$ 
and the number of elements in the feasible edges is only 1, i.e., $\forall (C,D) \in \mathcal{E}, |C \cap D| = 1$.

Solving the convex relaxation defined in \eq{cprimal} is equivalent to solving the dual defined in \eq{dual}.
On solving the dual variables, the optimal dual solution is given by
\begin{equation}
\begin{array}{l}
\forall i \in V, \mu_i = H(\{i\}), \\
\forall i \in V, \gamma_i = 0, \\
\forall C \in \mathcal{D}, \forall (C,D) \in \mathcal{E}, \lambda_{CD} = 0, \\
\forall C \in \mathcal{D}, \eta_C = 0, \\
\end{array}
\end{equation}
where $H(\{i\}) = - \hat{p}_i(x_i)\log(\hat{p}_i(x_i))$.

The optimal solution to the dual problem is given by
\begin{eqnarray}
\mathcal{Q}^*(\gamma, \mu, \lambda, \eta) & = &\inf_{\substack{\tau(C) \in [0,1]^{\mathcal{D}}\\ \sum_{C \in \mathcal{D}} \tau(C) = n-k \\ 
\tau \in \text{ hyperforest polytope of }(V, \mathcal{D})}} \sum_{C \in \mathcal{D}}  \bigg( H(C) - \sum_{i \in C} H(\{i\})\bigg) \tau(C) + \sum_{i \in V} H(\{i\}) \nonumber \\
                                            & = & \inf_{\substack{\tau(C) \in [0,1]^{\mathcal{D}}\\ \sum_{C \in \mathcal{D}} \tau(C) = n-k \\ 
\tau \in \text{ hyperforest polytope of }(V, \mathcal{D})}} -I(C).\tau(C) + \sum_{i \in V} H(\{i\}), \label{eq:dualoptimal} 
\end{eqnarray}

where $\forall C \in \mathcal{D}, I(C) = \sum_{i \in C} H(\{i\}) - H(C)$, which defines the mutual information of the elements in the clique, i.e., an edge if $k=1$. The constraints in \eq{dualoptimal} define a spanning tree polytope~\cite{schrijver2004combinatorial} and the optimal solution is a maximal information spanning tree, which is given by Chow-Liu trees~\cite{Chow68approximatingdiscrete}. They also form the optimal solution to the non-convex primal optimization defined in \eq{primal}.
\end{proof}

\begin{algorithm*}[htb!]
\caption{Approximate Greedy Primal Solution}
\label{alg:projPrimalFeasible}

\begin{algorithmic}
    \STATE {\bfseries Input:} primal infeasible sequence $\tau^t$ for \myalg{projSubgrad}, treewidth $k$, number of Vertices $n$, set of cliques $\mathcal{D}$ and integer $m$ such that $0 <  m \leq T$
    \STATE {\bfseries Output:} approximate discrete primal feasible solution $\tau_m$ after $m$ iterations of \myalg{projSubgrad}
    \STATE Initialize  Adjacency Matrix $Adj = zeros(n,n)$, ${\hat \tau_m} = \frac{1}{m} \sum_{t=0}^m \tau^t$ and $\tau_m = zeros(length({\hat \tau_m}))$
    \STATE $order$ = Sorted indices in the descending order ${\hat \tau_m}$
    \REPEAT
    \STATE Initialize $decomposable = false$, $treewidth = 0$, $numConnectedComponents = 0$, $i=1$
    \STATE {\bfseries update} $TestAdj$ = AddClique($Adj$, $\mathcal{D}(order(i))$)
    \STATE {\bfseries update} $[decomposable, treewidth]$ = checkGraphDecomposability($TestAdj$)
    \IF{$decomposable$ = true {\bfseries and} $treewidth \leq k$}
    \STATE {\bfseries update} $Adj = TestAdj$
    \STATE {\bfseries update} $\tau_m(Order(i)) = 1$
    \ENDIF
    \STATE $[numConnectedComponents]$  = getNumberConnectedComponents($TestAdj$)
    \STATE {\bfseries update} $i = i + 1$
    \UNTIL{$decomposable = true$, $treewidth=k$, $numConnectedComponents=1$, $i$ = length($order$)}
\end{algorithmic}
\end{algorithm*}

\paragraph{Approximate Greedy Primal Solution.} We describe an algorithm to project from the average of a sequence of fractional primary 
infeasible solutions, estimated during the iterations of projective supergradient, to an integral primary feasible solution. 
``AddClique" adds all the edges of a clique to the adjacency matrix. ``checkGraphDecomposability" checks if the maximal 
cardinality search  is a perfect elimination ordering. For decomposable graphs the maximal cardinality search yields 
a perfect elimination ordering~\cite{Golumbic:2004}. We refer to this as {\em decomposability test} in this paper.
``getNumberConnectedComponents" gives the number of connected components in the graph using breadth-first search. 
Note that the projection only uses the average clique selection function, ${\hat \tau_m}$, to obtain the primary 
feasible solutions, $\tau_m$. The corresponding edge selection, $\rho_m$, can be estimated from clique selection, 
$\tau_m$, by selecting the edges between consecutive cliques of the perfect sequence of selected cliques~\cite{Lauritzen}. 
The time complexity of the projection algorithm is $O(n^{k+2})$. This is due to decomposability test with run time 
complexity $O(n^{k+1})$, that is performed on adding $O(n)$ cliques.

\section{Experiments and Results}
\label{sec:results}
In this section, we show the performance of the proposed algorithm on synthetic datasets and classical benchmarks.

\paragraph{Decomposable covariance matrices.}
In order to easily generate controllable distributions with entropies which are easy to compute, we use several decomposable graphs and we consider a Gaussian vector with covariance matrix $\Sigma$, generated as follows: 
\begin{itemize}
\item sample a matrix $Z$ of dimensions $n \times d'$ with entries uniform in $[0,1]$ and consider the matrix
\begin{eqnarray}
\Sigma' = \frac{d}{d'} Z Z^\top + (1 - \frac{d}{d'})I,
\label{eq:pdm}
\end{eqnarray}
where $Z$ is a random matrix of dimensions $n \times d'$, $I$ is the $n$-dimensional identity matrix and $d$ is a parameter to  determine the
correlations between the nodes of the graph, which takes values in $\{0, d'\}$. In our experiments, we choose $d'$ to be $128$. We have tight 
correlations between the nodes with higher values of $d$.

\item normalize $\Sigma'$ to unit diagonal, and 
\item  The normalized random positive definite covariance matrix, $\Sigma'$, is projected onto a decomposable graph $G$ 
as follows:
\begin{eqnarray}
(\Sigma)^{-1} = \sum_{C \in \mathcal{C}(G)} [(\Sigma'_C)^{-1}]_n - \sum_{(C,D) \in \mathcal{T}(G)} [(\Sigma'_{C \cap D})^{-1}]_n,
\label{eq:projection}
\end{eqnarray}
where the operator $[(\Sigma'_X)^{-1}]_n$ gives an $n \times n$ matrix whose columns and rows representing the set $X \subseteq V$ are filled by $(\Sigma'_X)^{-1}$ and the
rest of the elements of the matrix are filled with {\em zeroes}. The matrix, $\Sigma$, thus generated represents the
covariance matrix of a multivariate Gaussian on a decomposable graph, $G$.
\end{itemize}

The projection ensures the following relationship between the random positive definite matrix, $\Sigma'$ and the projected covariance matrix $\Sigma$:
\begin{eqnarray}
\Sigma(i,j)         & = & \Sigma'(i,j)  \text{ if } A(i,j) = 1 \text{ or } i = j, \nonumber \\
{\Sigma}^{-1}(i, j) & = & 0             \text{ if } A(i,j) = 0.
\label{eq:covariance}
\end{eqnarray}
 where $A$ is the adjacency matrix of the decomposable graph $G$ onto which $\Sigma'$ was projected.

The entropy of a multivariate Gaussian with a covariance matrix, $\Sigma$, is given by 
$\frac{1}{2}\log(2\pi e)^n|\Sigma|$, where $|\Sigma|$ denotes the determinant of the covariance matrix. 
However, for Gaussian distribution that is factored in $G \in \mathcal{G}$:
\begin{equation}
|\Sigma| = \frac{\prod_{C \in \mathcal{C}(G)} |\Sigma_{C}|}{\prod_{((C,D) \in \mathcal{T}(G)} |\Sigma_{C \cap D}|},
\end{equation}
where $\Sigma_{X}$ is the sub-matrix of the covariance matrix whose rows and columns belong to the set $X \subseteq V$. Therefore, for any multivariate decomposable Gaussian graphical model, $G$:
\begin{eqnarray}
H(G) & = & \frac{1}{2}\log((2\pi e)^n|\Sigma|) \nonumber \\
     & = & \frac{1}{2}(\sum_{C \in \mathcal{C}(G)} \log((2\pi e)^n|\Sigma_{C}|) - \sum_{(C,D) \in \mathcal{T}(G)} \log((2\pi e)^n|\Sigma_{C \cap D}|)) \nonumber \\
     & = & \sum_{C \in \mathcal{C}(G)} H(C) - \sum_{(C,D) \in \mathcal{T}(G)} H(C \cap D).
\label{eq:entropyCov}
\end{eqnarray}
Note that the entropy of any graph, G, is independent of the mean of the normal distribution, hence we consider only the covariance matrix.

We use the graph structures representing a {\em chain 
junction tree} as in \myfig{JTGT}-(a) and a {\em star junction tree} as in \myfig{JTGT}-(b) to analyze the performance of our 
algorithm for decomposable covariance matrices generated with different correlations.  

Table~\ref{table:chainJT} and Table~\ref{table:starJT} show the performance of our algorithm on these two graphs. 
Decomposable covariance matrices are generated as above with 
different values of the correlation parameter $d$ (all averaged over ten different random covariance matrices). We show the difference between 
the cost function in \eq{primalCost} and the optimal entropy, i.e., the one of the
actual structure represented by the covariance matrices. The differences in the table are multiplied by $10^3$ for brevity.

The first column $\Delta$Dual represents the optimal value of our convex relaxation (obtained from the dual function), while the second column 
$\Delta$Dual$^r$ represents the optimal value by replacing the hyperforest constraint by the simply $\tau \in [0,1]^\mathcal{D}$. 
We can see from the two tables, that the two values are strictly negative (i.e., we indeed have a relaxation) and that the hyperforest 
constraint is key to obtaining tighter relaxations. Note that the associated solutions are only fractional.

The third column $\Delta$Primal represents the cost function obtained by projection of the optimal fractional solution of the hyperforest constraint,
using {\em Approximate Greedy Primal Solution} algorithm. The fourth column $\Delta$Primal$^r$ represents the cost function obtained by
projecting the optimal fractional solution of the hypercube constraint, i.e., the corresponding primal feasible solution related to $\Delta$Dual$^r$. 
They are compared to a simple greedy algorithm in the fifth column that sorts all mutual information and keep adding the cliques with largest 
mutual information as long as decomposability is maintained. Although the relaxation is not tight, our rounding scheme leads empirically to 
the optimal solution when the correlations are strong enough (i.e., large values of $d$) and outperform the simple greedy algorithm.

\begin{figure*}[htb]
\begin{center}
\begin{tabular}{cc}
\includegraphics[width=0.32\textwidth]{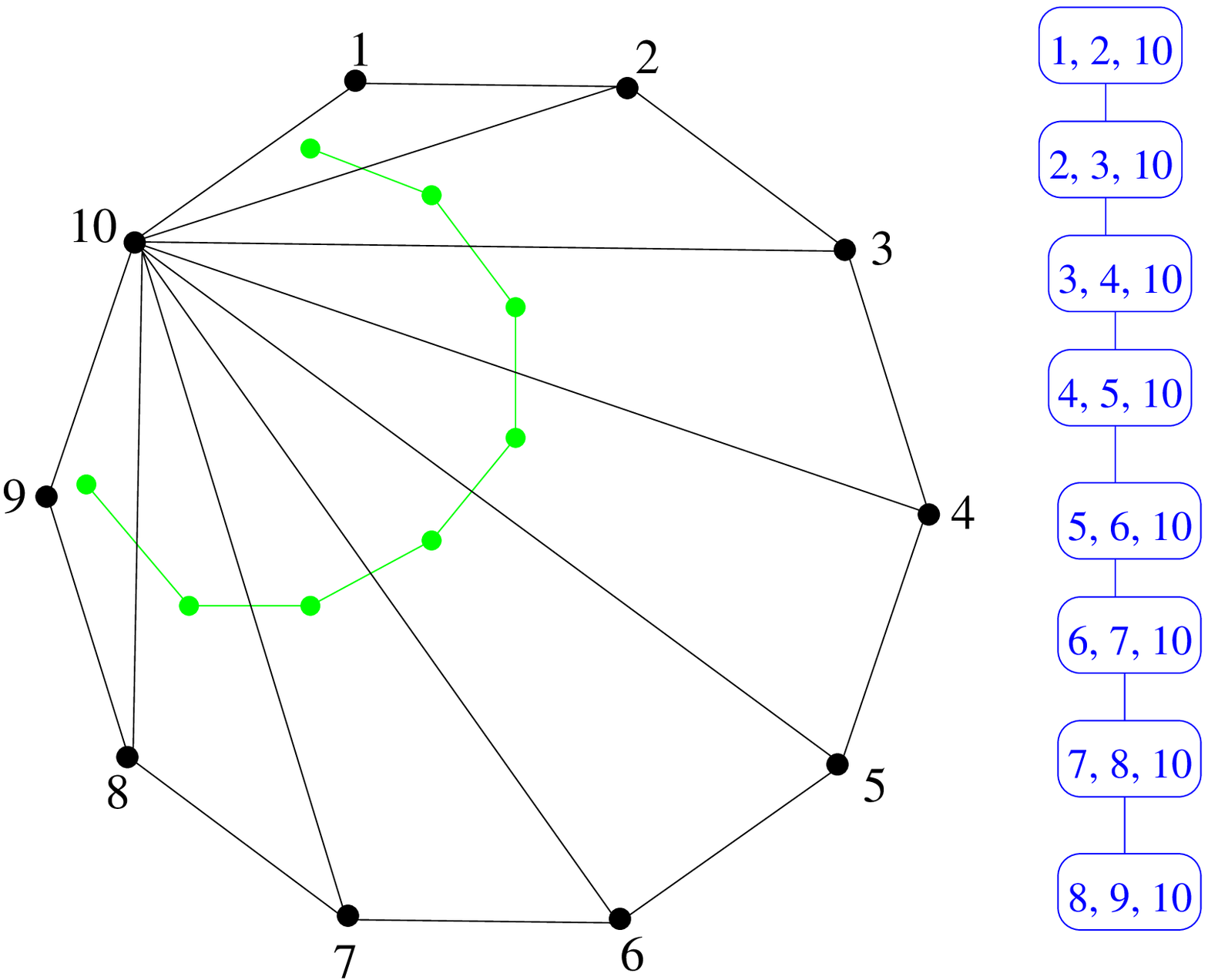}&
\includegraphics[width=0.55\textwidth]{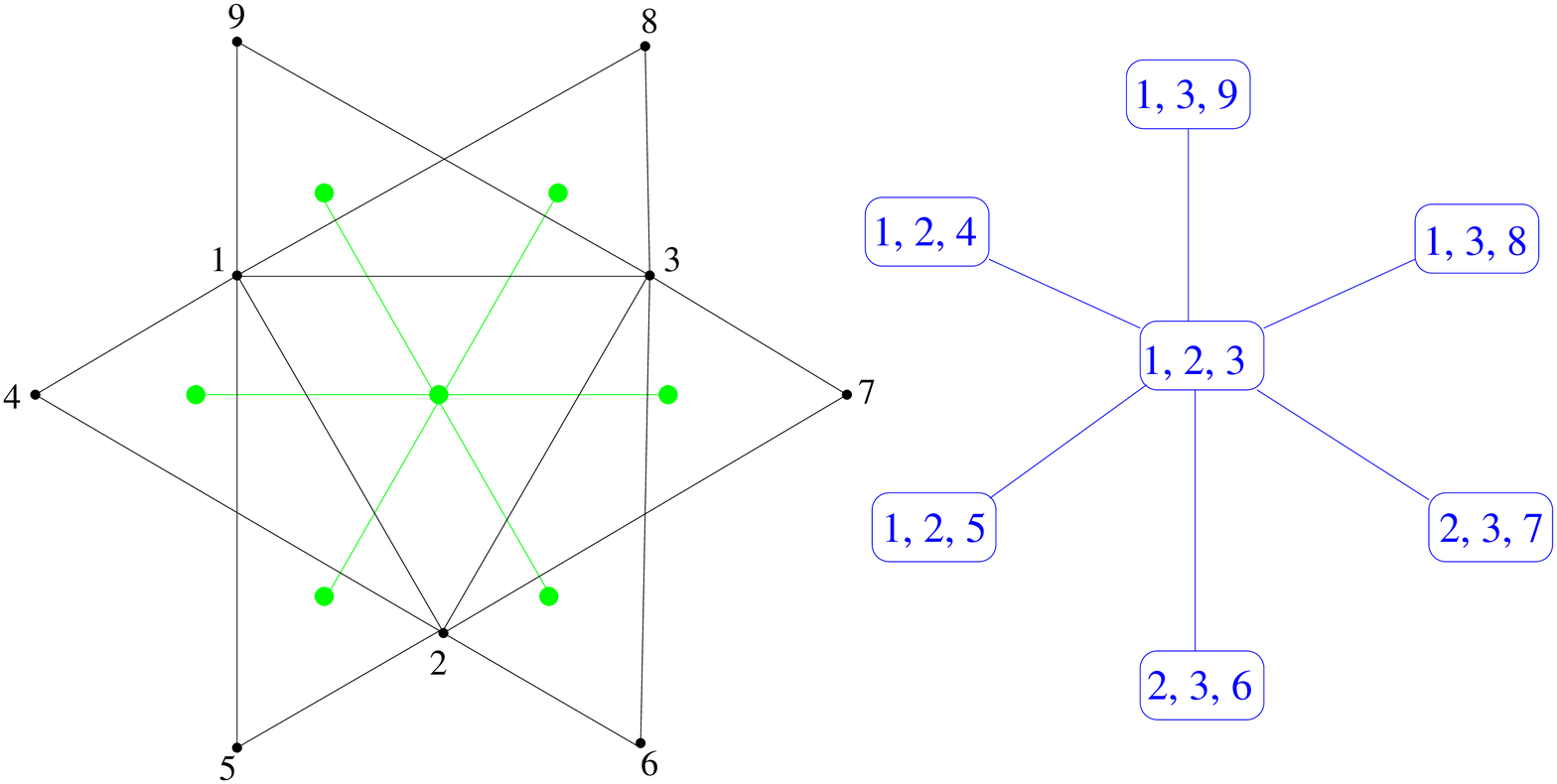}\\
(a) & (b)
\end{tabular}
\end{center}

\caption{Graph representing (a) chain junction tree, (b) star junction tree, with an embedded junction tree in green and its junction tree representation in blue.}
\label{fig:JTGT}
\end{figure*}

\begin{figure*}[htb]
\begin{center}
\begin{tabular}{ccc}
\includegraphics[width=0.32\textwidth]{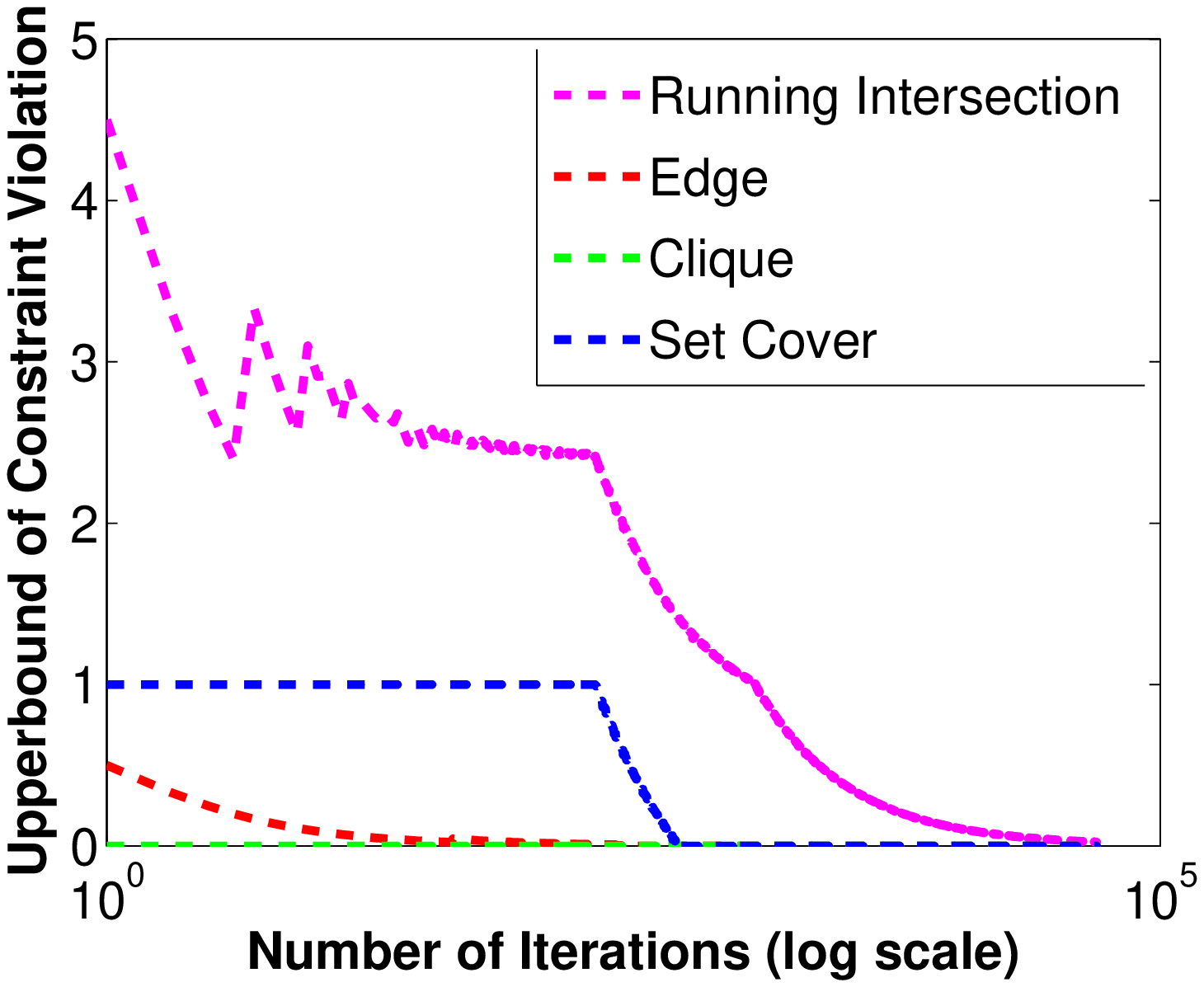}&
\includegraphics[width=0.32\textwidth]{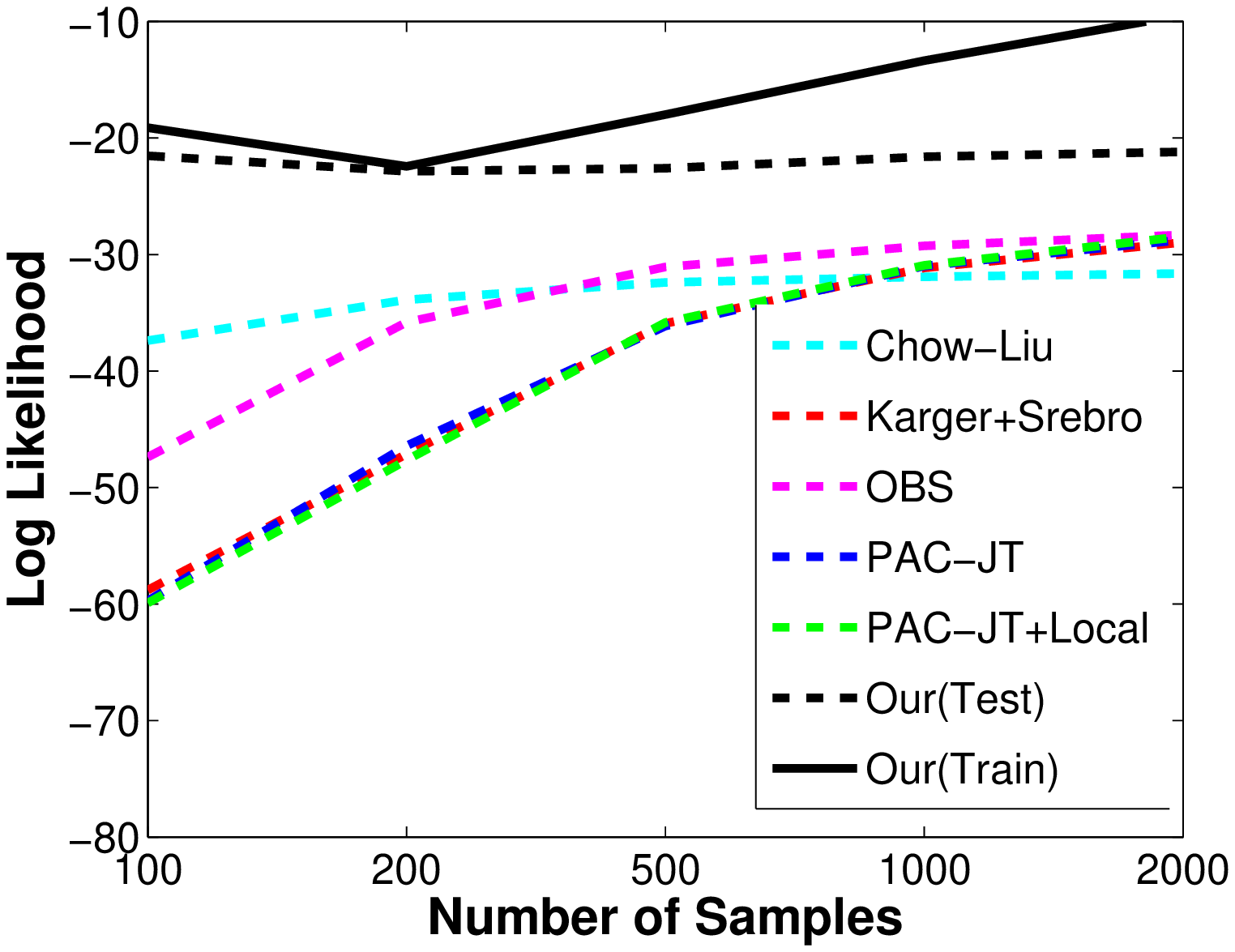}&
\includegraphics[width=0.32\textwidth]{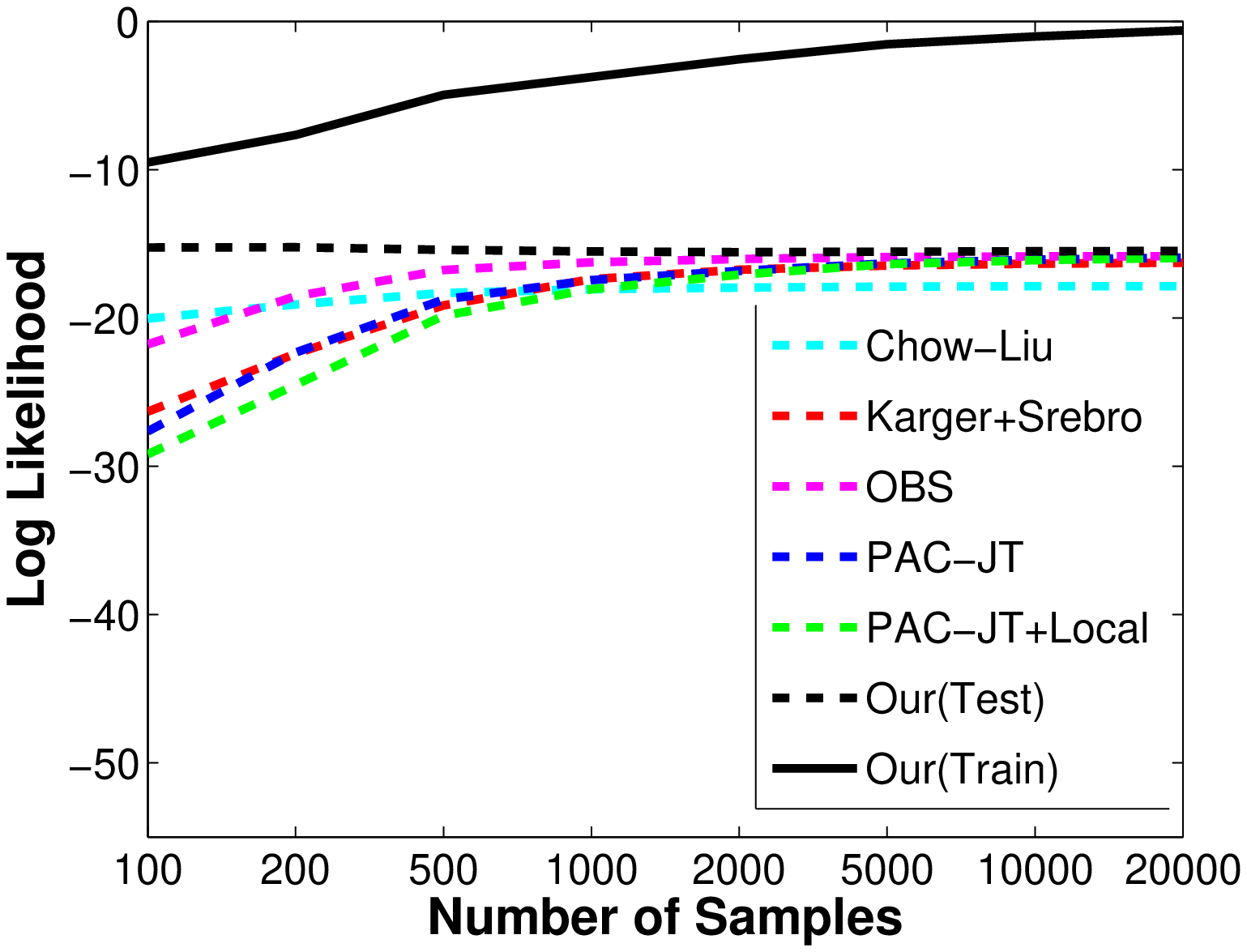}\\
(a) & (b) & (c)
\end{tabular}
\end{center}

\caption{Left: (a) Upper bound of constraint violations for d=2 and a chain junction tree. Right: Log likelihood of the structures learnt using various algorithms on (b) TRAFFIC and (c) ALARM datasets with $k=3$ except Chow-Liu $(k=1)$.}
\label{fig:results}
\end{figure*}

\begin{table}[!htb]
\caption{Performance on chain junction trees. See text for details. }

\begin{center}
\begin{tabular}{|r|c|c|c|c|c|}
\hline
d     & $\Delta$Dual & $\Delta$Dual$^r$ & $\Delta$Primal & $\Delta$Primal$^r$ & $\Delta$Greedy    \\ 
\hline                                                   
1    & -0.7$\pm$0.1 & -32.7$\pm$16.4 &  0.2$\pm$0.1      &    0.4$\pm$0.1     &  0.2$\pm$0.1 \\
2    & -0.4$\pm$0.1 & -23.4$\pm$9.6  &      0            &    0.3$\pm$0.2     &  0.5$\pm$0.2 \\
4    & -1.1$\pm$0.1 & -31.2$\pm$9.7  &      0            &    0.3$\pm$0.1     &  1.9$\pm$0.3 \\
8    & -0.6$\pm$0.1 & -23.9$\pm$9.8  &      0            &    0.2$\pm$0.1     &  7.9$\pm$0.3 \\
16   & -1.9$\pm$0.2 & - 3.4$\pm$2.7  &      0            &         0          &  25.6$\pm$1.2 \\
32   & -3.9$\pm$0.5 & - 3.2$\pm$0.3  &      0            &         0          &  57.3$\pm$1.5 \\
\hline
\end{tabular}

\end{center}
\label{table:chainJT}
\end{table}

\begin{table}[!htb]
\caption{Performance on star junction trees. See text for details. }

\begin{center}
\begin{tabular}{|r|c|c|c|c|c|}
\hline
d    & $\Delta$Dual  & $\Delta$Dual$^r$   & $\Delta$Primal & $\Delta$Primal$^r$ &  $\Delta$Greedy    \\ 
\hline                                                                             
1     & -0.8$\pm$0.1  & -31.4$\pm$13.4   &   0.2$\pm$0.1  &     0.5$\pm$0.1     &  0.9$\pm$0.1 \\
2     & -0.5$\pm$0.2  & -26.6$\pm$13.3   &      0         &     0.4$\pm$0.1     &  0.4$\pm$0.3 \\
4     & -0.3$\pm$0.0  & -16.6$\pm$4.1    &      0         &     0.2$\pm$0.1     &  1.7$\pm$0.2 \\
8     & -0.4$\pm$0.0  & -16.0$\pm$9.6    &      0         &          0          &  6.9$\pm$0.3 \\
16    & -1.2$\pm$0.5  & -3.1$\pm$0.3     &      0         &          0          &  26.3$\pm$1.5 \\
32    & -6.8$\pm$0.4  & -8.5$\pm$1.2     &      0         &          0          &  58.3$\pm$1.9 \\
\hline
\end{tabular}

\end{center}
\label{table:starJT}
\end{table}

\paragraph{Performance Comparison.} We compare the quality of the graph structures learned by the proposed algorithm with the ones 
produced by Ordering Based Search (OBS)~\cite{Teyssier:2005}, the combinatorial optimization algorithm proposed by Karger and Srebro 
(Karger+Srebro)~\cite{Karger:2001}, the Chow-Liu trees (Chow-Liu)~\cite{Chow68approximatingdiscrete} and different variations 
of PAC-learning based algorithms (PAC-JT, PAC-JT+local)~\cite{Chechetka:2007}. We use a real-world dataset, TRAFFIC~\cite{traffic} and 
an artificial dataset, ALARM~\cite{alarm} to compare the performances of these algorithms. 

This {ALARM} dataset was sampled from a known Bayesian network~\cite{alarm} of 37 nodes, which has a treewidth equal to 4. We learn 
an approximate decomposable graph of treewidth 3. The {TRAFFIC} dataset is the traffic flow information every 5 minutes 
for a month at 8000 locations in California~\cite{traffic}. The traffic flow information is collected at 32 locations in 
San Francisco Bay area and the values are discretized into 4 bins. We learn an approximate decomposable graph of treewidth 3 
using our approach.  Empirical entropies are computed from the generated samples of each data set and we infer the underlying
structure from them using our algorithm. \myfig{results}(b) and \myfig{results}(c) show the log-likelihoods of structures learnt using various algorithms 
on Traffic and Alarm datasets respectively. Note that the performance is better with higher values as we compare log-likelihoods. 
These figures illustrate the gains of the convex approach over the earlier non-convex approaches.

\section{Conclusion and Future Work}
\label{sec:conclusion}

In this paper, we have provided a convex relaxation to the problem of finding the maximum likelihood decomposable graph with bounded treewidth, with a polynomial-time optimization algorithm, which empirically outperforms previously proposed algorithms.
We are currently exploring two avenues for improvements: (a) design sufficient conditions for tightness of our relaxation, following the recent literature on relaxation of variable selection problems~\cite{candes2005decoding}, and (b) use heuristics to speed-up the  algorithms to allow application to larger graphs.

\paragraph{Acknowledgements}

We acknowledge support from the European Research Council grant SIERRA (project 239993), and would like to thank Anton Chechetka, Carlos Guestrin, Nathan Srebro and Percy Liang for sharing code and datasets. We would also like to thank other members of the SIERRA and WILLOW project-teams for helpful discussions.

\bibliography{convextjt}

\end{document}